\documentclass{article}
\PassOptionsToPackage{numbers}{natbib}
\usepackage[preprint]{neurips_2026}
\setcitestyle{numbers,square,citesep={,}}
\usepackage[utf8]{inputenc}
\usepackage[T1]{fontenc}
\usepackage{hyperref}
\usepackage{url}
\usepackage{booktabs}
\usepackage{amsfonts}
\usepackage{amsmath}
\usepackage{amssymb}
\usepackage{amsthm}

\usepackage{mathtools}
\usepackage{nicefrac}
\usepackage{microtype}
\usepackage{xcolor}
\usepackage{multirow}
\usepackage{graphicx}
\usepackage{algorithm}
\usepackage{algpseudocode}
\usepackage{array}
\usepackage{subcaption}
\usepackage{enumitem}
\usepackage{float}
\usepackage{placeins}
\newtheorem{proposition}{Proposition}

\DeclareMathOperator{\KL}{KL}

\DeclareMathOperator{\sg}{sg}

\newcommand{\Ls}{\mathcal{L}_{\mathrm{s}}}
\newcommand{\Linfo}{\mathcal{L}_{\mathrm{info}}}
\newcommand{\LG}{\mathcal{L}_{\mathrm{G}}}
\newcommand{\LD}{\mathcal{L}_{\mathrm{D}}}
\newcommand{\LNPE}{\mathcal{L}_{\mathrm{NPE}}}
\newcommand{\LadvG}{\mathcal{L}^{G}_{\mathrm{adv}}}
\newcommand{\LadvD}{\mathcal{L}^{D}_{\mathrm{adv}}}
\newcommand{\Lcyc}{\mathcal{L}_{\mathrm{cyc}}}
\newcommand{\Lid}{\mathcal{L}_{\mathrm{id}}}

\title{Information-Preserving Domain Transfer with Unlabeled Data in Misspecified Simulation-Based Inference}

\author{%
\parbox{\textwidth}{\centering
Joon Jang\textsuperscript{1} \quad
Eunho Jeong\textsuperscript{2} \quad
Kyu Sung Choi\textsuperscript{3,4,5}\thanks{Corresponding authors.} \quad
Hyeonjin Kim\textsuperscript{3,6}\footnotemark[1]
\\[1mm]
{\normalfont
\textsuperscript{1}Department of Biomedical Sciences, Seoul National University, Seoul, Republic of Korea\\
\textsuperscript{2}Department of Applied Bioengineering, Graduate School of Convergence Science and Technology, Seoul National University, Seoul, Republic of Korea\\
\textsuperscript{3}Department of Radiology, Seoul National University Hospital, Seoul, Republic of Korea\\
\textsuperscript{4}Department of Radiology, Seoul National University College of Medicine, Seoul, Republic of Korea\\
\textsuperscript{5}Healthcare AI Research Institute, Seoul National University Hospital, Seoul, Republic of Korea\\
\textsuperscript{6}Department of Medical Sciences, Seoul National University, Seoul, Republic of Korea
}
}
}

\begin{document}
\maketitle

\begin{abstract}
Simulation-based inference (SBI) provides amortized Bayesian parameter inference
from simulator-generated data without requiring explicit likelihood evaluation. Its reliability can
degrade under model misspecification, where real-world observations are not well
represented by the simulator used for training. Existing methods using unlabeled real-world data often align simulated and
real-world data distributions, but marginal alignment alone does not directly
preserve parameter-relevant information needed for posterior inference. We
propose SPIN, an SBI framework with parameter-relevant information-preserving
domain transfer using unlabeled, unpaired real-world observations. During training, SPIN translates labeled simulator observations toward the
real-world domain and back to the simulator domain, using the original simulator
labels to encourage domain transfer that preserves parameter-relevant mutual
information. At test time, the learned real-to-simulator
transport maps real-world observations into the simulator domain for posterior
inference, without requiring real-world parameter labels or paired
real--simulator observations. Across controlled synthetic and physical real-world benchmarks, SPIN improves
real-world posterior inference, with the improvement becoming clearer as
misspecification increases.
\end{abstract}

\section{Introduction}

Simulation-based inference (SBI) is widely used for Bayesian parameter inference
across scientific and engineering applications where simulators are available
but explicit likelihood evaluation is inconvenient, expensive, or inaccessible
\citep{cranmer2020frontier,deistler2025practicalguide}. Modern neural SBI methods train an inference network on simulated data to learn
the probabilistic relationship between simulation outputs and parameters. Neural
posterior estimation (NPE), a common neural SBI approach, uses a conditional
generative model to directly approximate the posterior distribution, enabling
amortized inference for new observations
\citep{papamakarios2016npe,greenberg2019automatic}. This makes NPE useful for
repeated inference, but its reliability remains sensitive to \textit{model
misspecification}: when real-world observations are not well represented by the
simulator, the posterior estimator is evaluated outside the distribution used for
training \citep{box1976science,frazier2020abc,cannon2022misspecification}. Its consequences and diagnostics have been
studied in approximate Bayesian computation, Bayesian asymptotics under
misspecification, and neural SBI, including posterior bias, overconfidence,
out-of-simulation behavior, and statistical model criticism
\citep{frazier2020abc,cannon2022misspecification,kleijn2012bernstein,gelman1996posterior,
ward2022rnpe,schmitt2024detecting,huang2023robuststatistics}. In this setting, direct amortized
inference on real-world data can yield biased or overconfident posteriors
\citep{cannon2022misspecification,ward2022rnpe,mishra2025selfconsistency}.

Recent approaches to misspecified SBI differ in the type of real-world
supervision they require. RoPE \citep{wehenkel2025rope} targets settings with a small real-world
calibration set of observations with known parameters and formalizes the
misspecification gap as an optimal-transport problem
\citep{courty2017optimal,villani2009optimal,peyre2019computational} between learned
representations of real-world and simulated observations.
FRISBI \citep{senouf2025frisbi} extends this known-parameter real-world data setting to an inductive and
amortized setting using OT-based
alignment over unpaired observations.

Without such a calibration set, recent work has used unlabeled real-world
observations during training. NPE with self-consistency (NPE+SC) \citep{mishra2025selfconsistency} augments the
simulation-based posterior loss with a self-consistency loss evaluated on
unlabeled observations. This loss exploits the
Bayesian self-consistency relation \citep{schmitt2024selfconsistency,ivanova2024selfconsistency} that the likelihood--prior to posterior ratio
is constant across parameter values under exact inference. However, evaluating
this constraint requires likelihood evaluations or an auxiliary likelihood
estimator.

Other methods using unlabeled real-world data include domain alignment
methods such as NPE with robust statistics (NPE-RS) \citep{huang2023robuststatistics}, NPE-MMD \citep{elsemuller2025uda}, and NPE-DANN
\citep{elsemuller2025uda}. NPE-RS and NPE-MMD use
maximum mean discrepancy (MMD) \citep{gretton2012mmd} alignment, while NPE-DANN uses
domain-adversarial neural network (DANN) alignment \citep{ganin2016dann}. These
methods show that domain alignment with unlabeled data can improve robustness under model
misspecification, but their adaptation objectives primarily match marginal
summary distributions or induce domain confusion in summary space.
The underlying issue is that marginal alignment does not determine the conditional
structure relating observations to task variables. This limitation is closely related
to results in domain adaptation, where marginal invariance does not ensure
conditional invariance and non-invertible representations can lose information
needed for prediction \citep{johansson2019support,zhao2019learning,stojanov2021domain}. Similarly, translation-based domain adaptation
has used semantic-consistency constraints because aligning marginal distributions
does not enforce semantic consistency and may change label-relevant content \citep{hoffman2018cycada}. Thus,
marginal domain alignment is useful but only indirectly constrains the
information needed for posterior inference.

We therefore propose SPIN, an SBI framework with parameter-relevant
information-preserving domain transfer for NPE under model misspecification.
SPIN learns bidirectional simulator--real-world transports using a
variational lower-bound objective designed to preserve parameter-relevant information
after transport. During training, a labeled simulator
observation is translated toward the real-world domain and then mapped back to
the simulator domain. Because this transported observation originates from a
labeled simulator sample, the original simulator parameter label remains
available, allowing SPIN to optimize the information-preserving constraint
without real-world parameter labels or paired real--simulator observations. At
test time, the learned real-to-simulator transport maps each real-world
observation into the simulator domain before posterior evaluation. Our main contributions are as follows:

\begin{enumerate}
\item \textbf{Information-preserving domain transfer.}
SPIN constrains domain transfer to preserve information needed for posterior
inference, rather than relying only on marginal alignment between simulated and
real-world data.

\item \textbf{Unlabeled, unpaired real-world adaptation.}
SPIN requires only simulator labels for training, enabling NPE adaptation with
unlabeled, unpaired real-world observations without real-world parameter labels.
\end{enumerate}

Across synthetic and controlled physical real-world benchmarks, SPIN improves
the main inference-quality metrics, with larger gains as the simulator--reality
gap increases.

\section{Method}

\subsection{Problem setup}

Let $\theta \in \Theta$ denote the parameter of interest and let
$x_s \in \mathcal{X}_s$ denote a simulated observation generated from
$p_s(x_s \mid \theta)$ with prior $p(\theta)$. Let
$x_r \in \mathcal{X}_r$ denote a real-world observation. We assume access to
$N_s$ labeled simulated pairs
$\mathcal{D}_s=\{(\theta_i^s,x_{s,i})\}_{i=1}^{N_s}$ and $N_r$ unlabeled
real-world observations $\mathcal{D}_r=\{x_{r,j}\}_{j=1}^{N_r}$. Real-world
observations are available without parameter labels and without pairing to
individual simulated samples.

NPE learns a conditional density estimator from simulated
parameter--observation pairs \citep{papamakarios2016npe,greenberg2019automatic}.
This estimator is often parameterized as a conditional normalizing flow, and for
high-dimensional observations it is commonly conditioned on a neural statistic
estimator $h_\omega$ \citep{radev2020bayesflow,chen2021nass}:
\[
q_{\psi,\omega}(\theta \mid x_s)
:=
q_\psi(\theta \mid h_\omega(x_s)).
\]
For notational simplicity, we write $q_\psi(\theta \mid x)$ throughout the
paper, with the understanding that this denotes
$q_\psi(\theta \mid h_\omega(x))$ when a summary network is used. Standard NPE minimizes the negative conditional log-likelihood of simulator
parameters under the posterior density estimator
\citep{papamakarios2016npe,greenberg2019automatic}. We denote this
simulator-side conditional density objective for a generic estimator $q$ by
\begin{equation*}
\Ls(q)
:=
\mathbb{E}_{(\theta,x_s)\sim p_s(\theta,x_s)}
\left[
-\log q(\theta \mid x_s)
\right].
\end{equation*}

Under model misspecification, direct evaluation of a simulator-trained posterior
on $x_r$ can be unreliable because real-world observations may not be represented
by the simulator distribution \citep{cannon2022misspecification,ward2022rnpe,mishra2025selfconsistency}. SPIN therefore defines test-time inference after
real-to-simulator transport:
\begin{equation}
\hat p(\theta \mid x_r)
:=
q_\psi(\theta \mid x_{rs}),
\qquad
x_{rs}:=G_{rs}(x_r).
\label{eq:test_rule_new}
\end{equation}
Here, $G_{rs}:\mathcal{X}_r\rightarrow\mathcal{X}_s$ denotes the
real-to-simulator transport. We also introduce the reverse transport
$G_{sr}:\mathcal{X}_s\rightarrow\mathcal{X}_r$, which maps simulator
observations toward the real-world domain. The two transports define a
bidirectional observation-space translation \citep{zhu2017cyclegan}: $G_{sr}$
produces observations in the real-world domain from simulator samples, and
$G_{rs}$ maps real-world observations into the simulator domain used by the
posterior estimator. For reliable test-time inference, $G_{rs}$ should transport
observations to the simulator domain while preserving parameter-relevant
information.

\subsection{Parameter-relevant information-preserving domain transfer}

As an idealized reference, suppose that real-world parameter labels were
available. Then preservation of parameter-relevant information under the
test-time transport $G_{rs}$ could be expressed by maximizing the mutual information (MI) \citep{shannon1948mathematical,cover2006elements}:
\begin{equation*}
\max_{G_{rs}} I_r\!\big(\Theta; G_{rs}(X_r)\big).
\end{equation*}
Here, $I_r(\cdot;\cdot)$ denotes MI under the real-world joint
distribution. In the unlabeled real-world setting, this objective is not
available. We therefore approach the information-maximization problem using transported
simulator observations that retain their simulator labels.

For a simulated pair $(\theta,x_s)$, define
\[
x_{sr} := G_{sr}(x_s),
\qquad
x_{srs} := G_{rs}(x_{sr}) = G_{rs}(G_{sr}(x_s)).
\]
The observation $x_s$ is first translated toward the real-world domain and then
mapped back to the simulator domain. Because $x_{srs}$ originates from a
labeled simulator sample, the original simulator parameter label remains
available. The information transmitted through this
transported observation can be quantified by the MI
\[
I_s\!\left(\Theta;G_{rs}(G_{sr}(X_s))\right)
=
H_s(\Theta)
-
H_s\!\left(\Theta\mid G_{rs}(G_{sr}(X_s))\right).
\]
Here, $H_s$ denotes entropy under the simulator-induced distribution. The
conditional entropy term is not directly available, but it can be bounded using
a variational approximation $q(\Theta\mid G_{rs}(G_{sr}(X_s)))$. By the non-negativity of the conditional Kullback--Leibler (KL) divergence,
\citep{barber2003algorithm,poole2019variational}:
\begin{equation}
I_s\!\left(\Theta;G_{rs}(G_{sr}(X_s))\right)
\ge
H_s(\Theta)
+
\mathbb{E}_{(\Theta,X_s)\sim p_s}
\!\left[
\log q\!\left(\Theta\mid G_{rs}(G_{sr}(X_s))\right)
\right].
\label{eq:mi_lower_bound}
\end{equation}

The bound is exact when
$q(\Theta\mid G_{rs}(G_{sr}(X_s)))$ equals the true conditional posterior. The
expectation term increases when the variational posterior assigns higher density
to the original simulator parameter after transport. This leads to the information-preservation loss, defined as the negative of the
expectation term in Eq.~\eqref{eq:mi_lower_bound}:
\begin{equation}
\Linfo(q)
:=
\mathbb{E}_{(\Theta,X_s)\sim p_s}
\Big[
-\log q\big(\Theta \mid G_{rs}(G_{sr}(X_s))\big)
\Big].
\label{eq:linfo_new}
\end{equation}
Since $H_s(\Theta)$ is fixed, minimizing $\Linfo$ maximizes the lower bound
above. During transport training, this objective updates $G_{rs}$ and $G_{sr}$
so that the transported simulator observation remains informative about the
original parameter.

If the learned transports match the observation distributions across domains
and maintain the statistical dependence encouraged by $\Linfo$, the
simulator-to-real component satisfies the population-level condition
\begin{equation}
(\Theta, X_r)
\stackrel{d}{=}
(\Theta, G_{sr}(X_s)),
\qquad
(\Theta,X_s)\sim p_s(\theta,x_s).
\label{eq:joint_sim_to_real_condition}
\end{equation}
That is, $G_{sr}(X_s)$ is matched
to $X_r$ jointly with the parameter. Under this condition, the learned real-to-simulator transport $G_{rs}$ can be
applied to real-world observations $x_r$ at test time while preserving
parameter-relevant information. 
This corresponds to the inference rule in Eq.~\eqref{eq:test_rule_new}. Appendix~\ref{app:info_math_details} formalizes this connection by showing that, under Eq.~\eqref{eq:joint_sim_to_real_condition}, the real-world test-time risk equals the simulator-labeled transported risk used by $L_{\mathrm{info}}$.

\subsection{Training objective}

SPIN jointly trains the transport networks, discriminators, and posterior
estimator. The full training procedure is provided in
Algorithm~\ref{alg:spin_training} in Appendix~\ref{app:implementation}.

\paragraph{Generator loss.}
The generator loss combines adversarial translation, cycle-consistency, identity
regularization \citep{zhu2017cyclegan}, and information preservation:
\begin{equation*}
\LG
=
\lambda_{\mathrm{adv}}\LadvG
+
\lambda_{\mathrm{cyc}}\Lcyc
+
\lambda_{\mathrm{id}}\Lid
+
\lambda_{\mathrm{info}}
\mathbb{E}_{(\theta,x_s)\sim p_s(\theta,x_s)}
\left[
-\log q_{\sg(\psi,\omega)}\!\left(\theta \mid x_{srs}\right)
\right].
\end{equation*}
Here, $\sg(\cdot)$ denotes the stop-gradient operator, and
$q_{\sg(\psi,\omega)}$ indicates that the summary network and posterior
estimator are held fixed during the generator update. Thus, the
information-preservation term updates only the transport networks $G_{rs}$ and
$G_{sr}$.

\paragraph{Discriminator loss.}
The discriminators use hinge adversarial losses with spectral normalization
\citep{miyato2018spectral}:
\[
\LD=\LadvD.
\]

\paragraph{Posterior loss.}
The posterior estimator is trained on both original simulated observations and
transported observations with simulator labels:
\begin{equation*}
\LNPE
=
\Ls(q_\psi)
+
\lambda_{\mathrm{info}}
\mathbb{E}_{(\theta,x_s)\sim p_s(\theta,x_s)}
\left[
-\log q_\psi\!\left(\theta \mid \sg(x_{srs})\right)
\right].
\end{equation*}
Here, $\sg(x_{srs})$ keeps the transported observations fixed. This update
extends posterior training to transported simulator-domain inputs, while
preventing the posterior loss from changing the transport itself.

Unless otherwise stated, we set $\lambda_{\mathrm{info}}=1$. This assigns the
transported observation with a simulator label the same negative conditional
log-likelihood weight as the original simulator observation, without selecting
the constraint strength using real-world labels.  Figure~\ref{fig:method_overview}
summarizes the training and inference procedure.

\begin{figure}[t]
    \centering
    \includegraphics[width=0.92\linewidth]{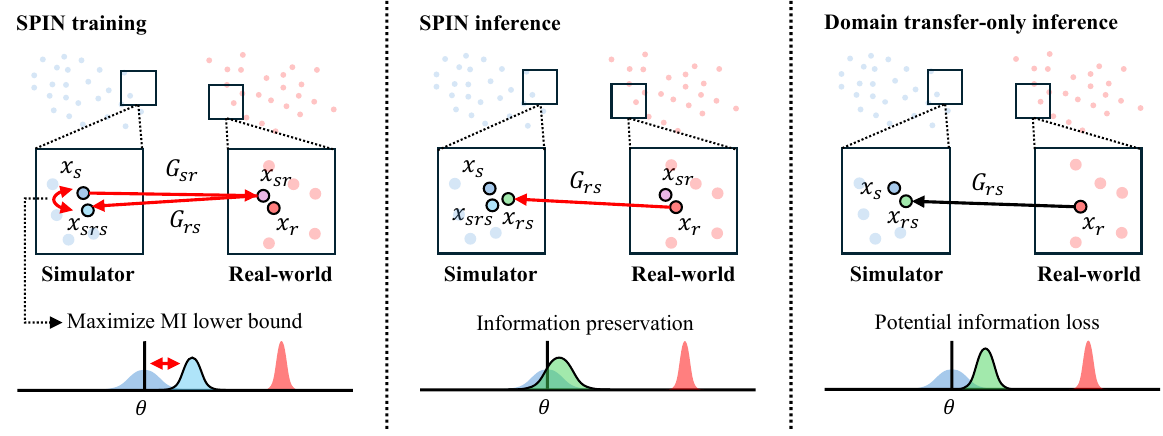}
    \caption{
\textbf{Overview of SPIN.}
During training, a labeled simulator observation $(\theta,x_s)$ is translated toward the real-world domain as $x_{sr}$ and returned to the simulator domain as $x_{srs}$. Since $x_{srs}$ retains the simulator-origin label, SPIN uses $\theta$ to maximize the MI lower bound in Eq.~\eqref{eq:mi_lower_bound}, encouraging parameter-relevant information to be preserved after transport. At test time, an unlabeled real-world observation $x_r$ is mapped to the simulator domain as $x_{rs}=G_{rs}(x_r)$, and inference is performed as $q_\psi(\theta\mid x_{rs})$. The right panel illustrates that domain transfer alone can match domains while losing information needed for posterior inference.
}
    \label{fig:method_overview}
\end{figure}

\section{Experiments}

We evaluate SPIN in the unlabeled, unpaired real-world adaptation regime.
Real-world parameter labels, when available in controlled benchmarks, are used
only for evaluation.

\subsection{Benchmarks}

Additional dataset and simulation details are provided in
Appendix~\ref{app:datasets}.

\paragraph{SIR.}
SIR \citep{kermack1927sir} is a weakly misspecified synthetic benchmark used in recent robust SBI
evaluations and based on a discrete-time stochastic susceptible--infected--recovered model \citep{wehenkel2025rope,senouf2025frisbi,elsemuller2025uda}. The
parameter is
\[
\theta = (\beta, \gamma),
\]
where $\beta$ denotes the infection rate and $\gamma$ denotes the recovery rate.
Misspecification arises from delayed weekend reporting, where a fraction of
Saturday and Sunday infections is shifted to Monday.

\paragraph{Pendulum.}
Pendulum is a controlled synthetic benchmark used in prior misspecified SBI
studies \citep{wehenkel2025rope,senouf2025frisbi}. The synthetic real-world
domain introduces damping while preserving the same parameter value
\[
\theta=(\omega_0,\phi),
\]
where $\omega_0$ denotes the natural angular frequency and $\phi$ denotes the
initial phase. The damping coefficient $\alpha$ is sampled from a continuous
range so that the benchmark reflects a continuum of misspecification.

\paragraph{Wind Tunnel.}
Wind Tunnel evaluates a controlled physical simulator--reality gap in a
wind-chamber system. This benchmark uses the controlled wind-tunnel setup from
Causal Chambers \citep{gamella2025causalchambers}, which has been used for misspecified SBI evaluation
\citep{wehenkel2025rope,senouf2025frisbi}. Following
the benchmark setup, the inference target is
\[
\theta = H,
\]
where $H$ denotes the hatch position controlling the wind-tunnel opening. The
simulator captures the main physical mechanism, whereas misspecification arises
because the mechanistic model only approximates the real chamber response to fan
loads and hatch position.

\paragraph{Light Tunnel.}
Light Tunnel also evaluates a controlled physical simulator--reality gap, here
in an optical imaging system. This benchmark uses the controlled light-tunnel
setup from Causal Chambers \citep{gamella2025causalchambers}, which has also been used in prior misspecified SBI
benchmarks
\citep{wehenkel2025rope,senouf2025frisbi}. The
inference target is
\[
\theta=(R,G,B,\alpha),
\]
where $(R,G,B)$ denote the light-source color intensities and $\alpha$ denotes
the polarizer-induced dimming factor. The simulator captures the dominant
optical mechanism, whereas misspecification arises from simplified modeling of
the real light-tunnel response.

\subsection{Baselines}

We compare SPIN to standard NPE \citep{papamakarios2016npe,greenberg2019automatic} and two unlabeled domain-alignment methods,
NPE-MMD \citep{elsemuller2025uda} and NPE-DANN \citep{elsemuller2025uda}. NPE uses only labeled simulations, whereas NPE-MMD and
NPE-DANN also use unlabeled real-world observations for adaptation. We use
NPE-MMD rather than NPE-RS \citep{huang2023robuststatistics} because the original NPE-RS formulation is not fully
amortized. Across methods, the same summary network and posterior flow are used
within each task.

RoPE \citep{wehenkel2025rope} and FRISBI \citep{senouf2025frisbi} are not included as baselines because they use a small
real-world calibration set with known parameters. NPE+SC \citep{mishra2025selfconsistency} is also not included as
a baseline because it relies on Bayesian self-consistency \citep{schmitt2024selfconsistency,ivanova2024selfconsistency} with likelihood
evaluations or an auxiliary likelihood estimator. Related methods are summarized in
Appendix~\ref{app:method_comparison}.
Implementation details, architectures, and
optimization settings are provided in Appendix~\ref{app:implementation}.

\subsection{Evaluation protocol}

Real-world parameter labels are reserved exclusively for evaluation and are
never used during adaptation. Unless otherwise stated, quantitative results are
reported as mean $\pm$ standard deviation over five independent training runs
with different random seeds. Posterior means and sample-based
posterior-discrepancy metrics are computed using $1000$ posterior samples per
observation.

\paragraph{Main metrics.}
We evaluate real-world posterior quality using root mean squared error (RMSE),
log posterior probability (LPP), and ACAUC, defined as the signed area between
the empirical coverage curve and the nominal coverage diagonal \citep{wehenkel2025rope,senouf2025frisbi}. RMSE of posterior means measures point-estimation
accuracy, with lower values indicating that posterior mass is centered closer to
the ground-truth parameter. LPP evaluates the posterior density assigned to the
ground-truth parameter \citep{lueckmann2021benchmarking}. It is the evaluation metric closest in form
to the information-preservation objective, because both LPP and $\Linfo$ depend
on the posterior density assigned to the corresponding parameter value. From the
variational information-maximization perspective, this posterior-density term
acts as the tractable energy term in a lower bound on parameter-relevant mutual
information, with the entropy term fixed within each task
\citep{barber2003algorithm,poole2019variational}. Thus, higher LPP can be
interpreted as stronger posterior-based information transmission from the
posterior input to the parameter, up to the variational approximation described
in Appendix~\ref{app:LPPeq}.
ACAUC is used as a scalar posterior diagnostic based on expected-coverage
analysis \citep{hermans2022trust}. We interpret ACAUC together with RMSE and LPP, because
global diagnostics provide only a partial check of posterior quality. The
corresponding diagnostic curves are reported in
Appendix~\ref{app:calibration_analysis}. Appendix~\ref{app:xsrs_supervised_eval} further evaluates the
simulator-originated transported observations $x_{srs}$, where $\Linfo$ is
directly applied, using RMSE, LPP, and ACAUC. To assess sensitivity to
misspecification strength, we vary the Pendulum damping coefficient and evaluate
RMSE, LPP, and ACAUC across damping levels.

We report symmetric KL divergence (sKL) \citep{kullback1951information}, Wasserstein distance \citep{ramdas2017wasserstein}, and posterior MMD \citep{gretton2012mmd} together in
Appendix~\ref{app:additional_quant} as secondary posterior-discrepancy
diagnostics, computed between simulator-side posteriors and real-world posteriors for matched
simulated and real-world observations.

To evaluate the contribution of parameter-relevant information preservation, we
include a transport-only variant, SPIN (w/o $\Linfo$), which keeps the same
bidirectional transport architecture but removes the information-preservation
term. Comparing this variant with SPIN tests whether simulator-labeled
conditional density supervision on transported observations improves real-world
posterior quality.

\paragraph{Ablation study.}
We vary $N_r$ to assess the dependence of SPIN
on the amount of unlabeled real-world data used for adaptation. This analysis
evaluates sensitivity to the size of the real-world adaptation set, including
settings with limited real-world observations. We additionally vary
$\lambda_{\mathrm{info}}$ in Appendix~\ref{app:lambda_sensitivity}
(Figure~\ref{fig:lambda_info_sensitivity}) to examine the sensitivity of the
method to the strength of information-preservation supervision.

\section{Results and Discussion}

\subsection{Main results}

\begin{table}[t]
\centering
\caption{Main benchmark results on the four tasks. Results are reported as mean $\pm$ standard deviation over runs. Lower is better for RMSE, higher is better for LPP, and ACAUC is best when closer to zero.}
\label{tab:main_results}
\resizebox{\linewidth}{!}{
\begin{tabular}{llccccc}
\toprule
Task & Metric & NPE & NPE-MMD & NPE-DANN & SPIN (w/o $\Linfo$) & SPIN \\
\midrule
\multirow{3}{*}{SIR}
& RMSE  & $0.059 \pm 0.005$ & $\mathbf{0.058 \pm 0.001}$ & $0.059 \pm 0.002$ & $0.060 \pm 0.006$ & $0.059 \pm 0.003$ \\
& LPP   & $4.002 \pm 0.226$ & $4.043 \pm 0.463$ & $\mathbf{4.235 \pm 0.281}$ & $3.350 \pm 1.797$ & $3.775 \pm 0.757$ \\
& ACAUC & $-0.032 \pm 0.014$ & $-0.033 \pm 0.022$ & $-0.048 \pm 0.010$ & $\mathbf{-0.011 \pm 0.029}$ & $-0.025 \pm 0.016$ \\
\midrule

\multirow{3}{*}{Pendulum}
& RMSE  & $0.774 \pm 0.217$ & $0.514 \pm 0.140$ & $0.572 \pm 0.097$ & $0.344 \pm 0.069$ & $\mathbf{0.287 \pm 0.062}$ \\
& LPP   & $-49.89 \pm 19.17$ & $-89.53 \pm 68.95$ & $-57.02 \pm 72.97$ & $-8.341 \pm 6.824$ & $\mathbf{-4.327 \pm 3.914}$ \\
& ACAUC & $0.265 \pm 0.078$ & $0.220 \pm 0.058$ & $0.185 \pm 0.053$ & $0.077 \pm 0.063$ & $\mathbf{0.062 \pm 0.063}$ \\
\midrule

\multirow{3}{*}{Wind Tunnel}
& RMSE  & $7.768 \pm 0.893$ & $4.108 \pm 0.928$ & $4.345 \pm 0.654$ & $4.128 \pm 0.438$ & $\mathbf{4.098 \pm 0.433}$ \\
& LPP   & $-43.02 \pm 16.61$ & $-26.33 \pm 10.62$ & $-27.43 \pm 14.42$ & $-26.96 \pm 8.059$ & $\mathbf{-10.04 \pm 1.412}$ \\
& ACAUC & $0.417 \pm 0.013$ & $0.304 \pm 0.052$ & $0.317 \pm 0.039$ & $0.344 \pm 0.025$ & $\mathbf{0.280 \pm 0.020}$ \\
\midrule

\multirow{3}{*}{Light Tunnel}
& RMSE  & $66.16 \pm 11.38$ & $57.51 \pm 2.727$ & $47.56 \pm 7.008$ & $40.41 \pm 3.226$ & $\mathbf{35.33 \pm 0.871}$ \\
& LPP ($\times 10^3$) & $-3.626 \pm 2.522$ & $-2.394 \pm 0.221$ & $-1.404 \pm 0.649$ & $-0.536 \pm 0.087$ & $\mathbf{-0.124 \pm 0.013}$ \\
& ACAUC & $0.326 \pm 0.027$ & $0.246 \pm 0.019$ & $0.211 \pm 0.052$ & $0.187 \pm 0.017$ & $\mathbf{0.163 \pm 0.017}$ \\
\bottomrule
\end{tabular}
}
\end{table}

Table~\ref{tab:main_results} summarizes the quantitative results across the four
benchmarks. Overall, SPIN showed a more stable performance profile across RMSE,
LPP, and ACAUC than the marginal domain-alignment baselines.
Appendix~\ref{app:qualitative_transport_examples} provides qualitative transport
examples as visual diagnostics of the learned observation-space transport.

NPE-MMD and NPE-DANN improved over standard NPE in some settings, but their gains
were not uniform across tasks and metrics. In comparison, SPIN improved the main
posterior-quality metrics more consistently on Pendulum, Wind Tunnel, and Light
Tunnel. This indicates that real-world posterior inference benefits not only
from reducing domain discrepancy, but also from domain transport that preserves
information relevant to the parameter.

This effect is most directly reflected in LPP. The LPP gains on Pendulum, Wind Tunnel, and Light
Tunnel therefore support the intended information-preserving role of
real-to-simulator transport. The accompanying RMSE and ACAUC gains indicate that the LPP improvement is not
only a density-level effect. Posterior means also move closer to the reference
parameters, while the coverage diagnostic shows improved uncertainty behavior. Appendix~\ref{app:xsrs_supervised_eval} reports the corresponding check on
$x_{srs}$, where the effect of $\Linfo$ should be most directly observed. The
higher LPP of SPIN on this path confirms that $\Linfo$ behaves as intended on
the labeled simulator-originated observations.

SIR showed smaller differences among methods. This is consistent with a boundary
case where standard NPE already provides a strong reference point, leaving less
room for adaptation to improve posterior inference. Similar behavior has been
reported in FRISBI, where standard NPE can be sufficient under minimal
misspecification \citep{senouf2025frisbi}. In such settings, additional
alignment or transport objectives may yield smaller or less consistent gains
because adaptation introduces optimization and regularization trade-offs
\citep{elsemuller2025uda}.

On Pendulum, Wind Tunnel, and Light Tunnel, adding $\Linfo$ improved the main
inference metrics compared with SPIN (w/o $\Linfo$). This pattern indicates that
the improvement is not only due to bidirectional transport, but also to the
information-preservation constraint imposed by $\Linfo$, which encourages
transported observations to remain informative about the corresponding simulator
parameter.

The posterior-discrepancy diagnostics in Appendix~\ref{app:additional_quant}
provide secondary support for this interpretation. SPIN reduced sKL on the
benchmarks with larger simulator--reality gaps, indicating reduced
posterior-level disagreement between matched simulator and real-world
observations. Wasserstein distance and posterior MMD showed broadly consistent
but task-dependent patterns, suggesting that this effect is not limited to a
single discrepancy measure.

\subsection{Sensitivity to misspecification strength}
\label{sec:misspec_strength}

The Pendulum benchmark allows the strength of model misspecification to be
controlled through the damping coefficient. Figure~\ref{fig:posterior_compare}
shows qualitative posterior comparisons on matched Pendulum examples under
increasing misspecification. As damping increased, direct inference on the real-world observation produced
posteriors with less mass near the reference parameter, whereas SPIN placed
posterior mass closer to the reference parameter after real-to-simulator
transport.

\begin{figure}[t]
    \centering
    \includegraphics[width=0.92\linewidth]{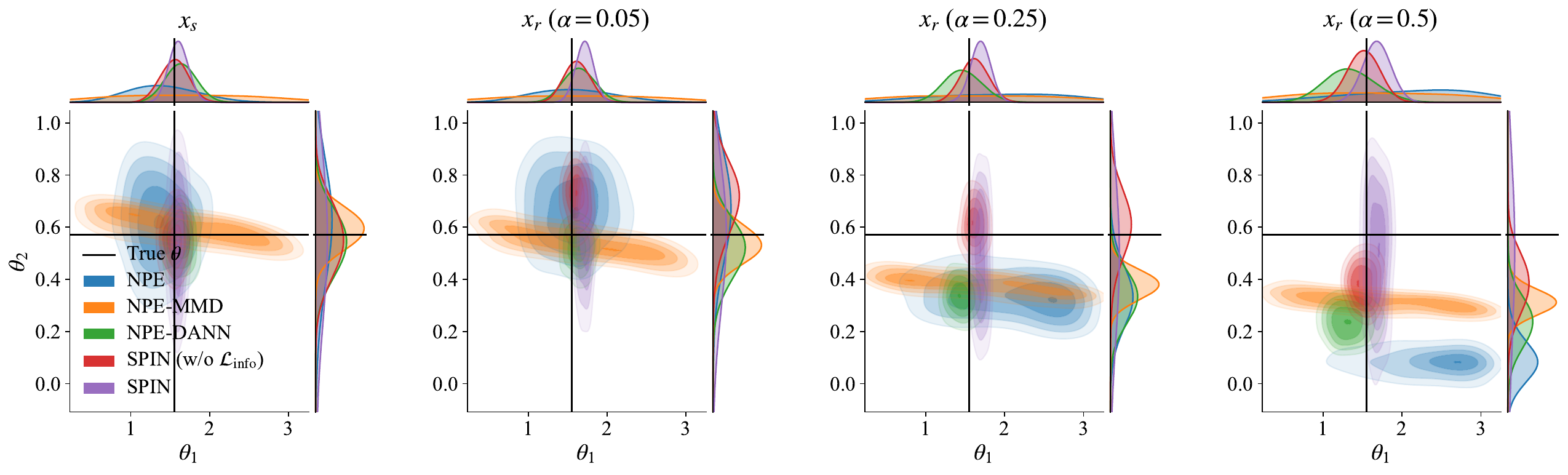}
    \caption{\textbf{Posterior comparison across methods on matched Pendulum samples under increasing misspecification.}
    Each column shows a matched simulator/real-world example sharing the same
    parameter, with the real-world observation generated under a
    different damping strength $\alpha$. In
    the stronger damping settings, SPIN shows posterior mass closer to the
    reference parameter. These examples provide qualitative posterior comparisons under increasing
    misspecification and complement the RMSE, LPP, and ACAUC results in
    Table~\ref{tab:main_results}.}
    \label{fig:posterior_compare}
\end{figure}

Figure~\ref{fig:misspec_strength} summarizes the corresponding quantitative
trend. Under weak shift, all adaptation methods were relatively close, while SPIN
remained competitive. As damping increased, the gains of SPIN became clearer,
particularly in LPP while maintaining favorable RMSE and ACAUC. Together, the posterior examples and quantitative
metrics indicate that the benefit of SPIN becomes clearer as misspecification
increases. This within-task trend is consistent with the cross-benchmark pattern in
Table~\ref{tab:main_results}, where SPIN showed clearer improvements on
Pendulum and on the controlled physical simulator--reality gaps in Wind Tunnel
and Light Tunnel than on weakly misspecified SIR.

\begin{figure}[t]
    \centering
    \includegraphics[width=0.92\linewidth]{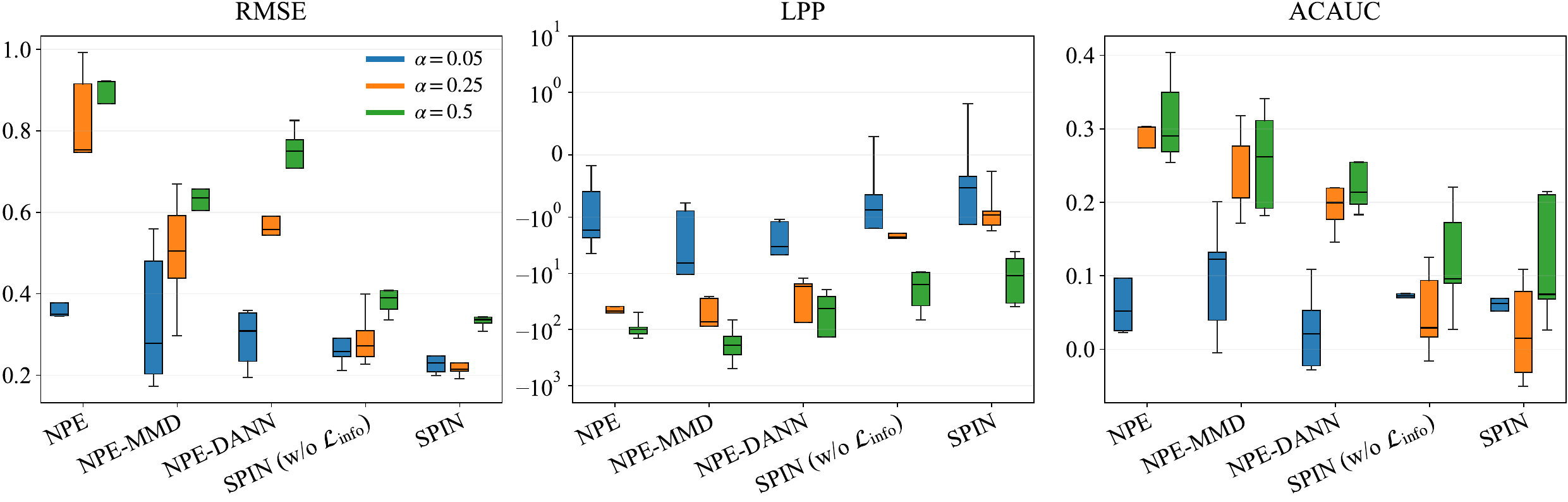}
    \caption{\textbf{Sensitivity to misspecification strength on Pendulum.}
    Box plots summarize performance over five independent runs at three damping
    levels $\alpha \in \{0.05, 0.25, 0.5\}$. Under weak shift, methods are relatively close, while SPIN shows clearer gains as
damping increases. SPIN (w/o $\mathcal{L}_{\mathrm{info}}$) removes
    simulator-labeled information-preservation supervision, highlighting the
    contribution of $\mathcal{L}_{\mathrm{info}}$ under stronger
    misspecification.}
    \label{fig:misspec_strength}
\end{figure}

The comparison with SPIN (w/o $\mathcal{L}_{\mathrm{info}}$)
shows how the contribution of simulator-labeled information-preservation
supervision changes with misspecification strength. The
performance gap between the two variants became larger under stronger damping.
As the simulator--reality gap increases, marginal alignment can reduce domain
discrepancy, but it does not ensure preservation of the relation between the
physical parameter and the posterior input
\citep{johansson2019support,zhao2019learning,stojanov2021domain}. This trend is
therefore consistent with the intended role of $\mathcal{L}_{\mathrm{info}}$,
which becomes most useful when misspecification is large enough.

\subsection{Training dynamics of the information-preservation objective}

Figure~\ref{fig:info_dynamics} provides an additional analysis of the training
dynamics of $\Linfo$ in Eq.~\eqref{eq:linfo_new}. We compare SPIN with its
transport-only variant, SPIN (w/o $\mathcal{L}_{\mathrm{info}}$), where the same
quantity is monitored but not used as a transport constraint.

The difference between the two variants was smallest on weakly misspecified SIR
and larger on Pendulum and on the controlled physical simulator--reality gaps in
Wind Tunnel and Light Tunnel. This trend is consistent with the main results.
The effect of information-preservation supervision becomes more visible as the
simulator--reality gap increases. The training dynamics
therefore provide supporting evidence for the improvements observed in
Table~\ref{tab:main_results} and Figure~\ref{fig:misspec_strength}. The
simulator-side NPE loss $\Ls$ is reported in
Appendix~\ref{app:optimization_dynamics}.


\begin{figure}[t]
    \centering
    \includegraphics[width=0.92\linewidth]{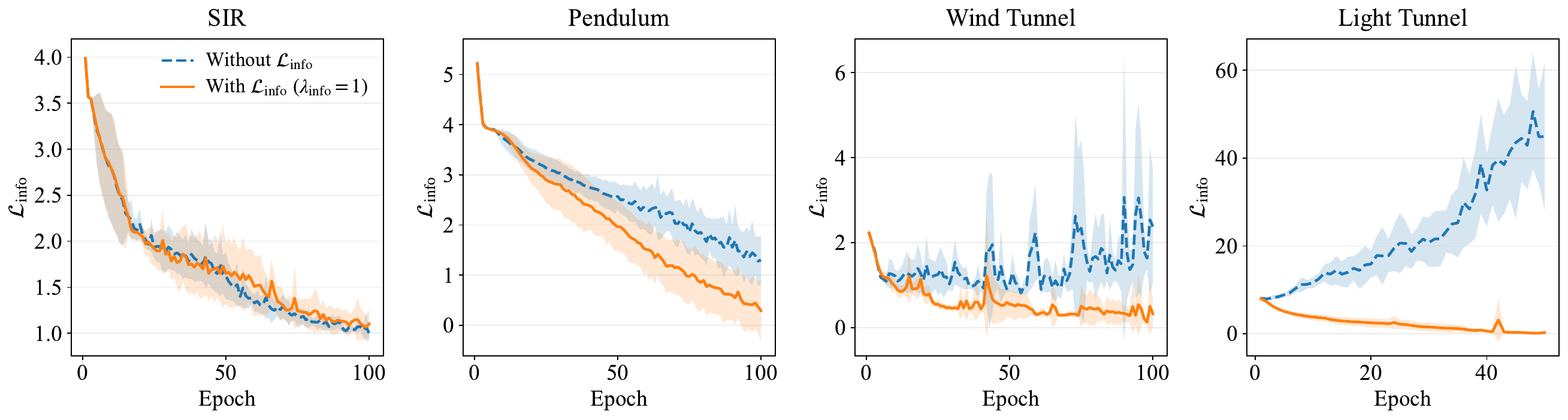}
    \caption{\textbf{Effect of information-preservation supervision during training.}
    We plot the mean over five runs together with run-to-run variability for
    $\Linfo$ in Eq.~\eqref{eq:linfo_new}, comparing the transport-only variant
    $(\lambda_{\mathrm{info}}=0)$ with the full method
    $(\lambda_{\mathrm{info}}=1)$. The difference is smallest on SIR and becomes larger on
Pendulum, Wind Tunnel, and Light Tunnel.}
    \label{fig:info_dynamics}
\end{figure}

\subsection{Sensitivity to the amount of unlabeled real-world data}

We evaluated how SPIN changes with $N_r$, the number of unlabeled real-world
observations used during adaptation. Figure~\ref{fig:nreal_sensitivity} shows
that the effect of the real-world data budget depends on the task and metric.
SIR and Wind Tunnel showed relatively small variation across budgets. Pendulum
showed clearer changes in LPP and ACAUC, whereas Light Tunnel was more sensitive
to very small real-world budgets, especially in RMSE and LPP.

This pattern is consistent with the role of unlabeled real-world observations in
SPIN. The real-world pool provides samples from the target observation
distribution used to train simulator-to-real transport. In particular,
$G_{sr}$ is trained so that simulator-originated observations are translated
toward the empirical real-world distribution, which in turn defines the
real-like inputs on which the return transport $G_{rs}$ is constrained. When
$N_r$ is very small, this empirical target distribution can be estimated only
coarsely, which can make transport learning less stable, especially under larger
simulator--reality gaps. Together, these results indicate that sufficient
coverage of the real-world observation distribution is important for stable
transport learning, but that posterior quality remains task- and
metric-dependent rather than being determined by $N_r$ alone.

\begin{figure}[t]
    \centering
    \includegraphics[width=0.92\linewidth]{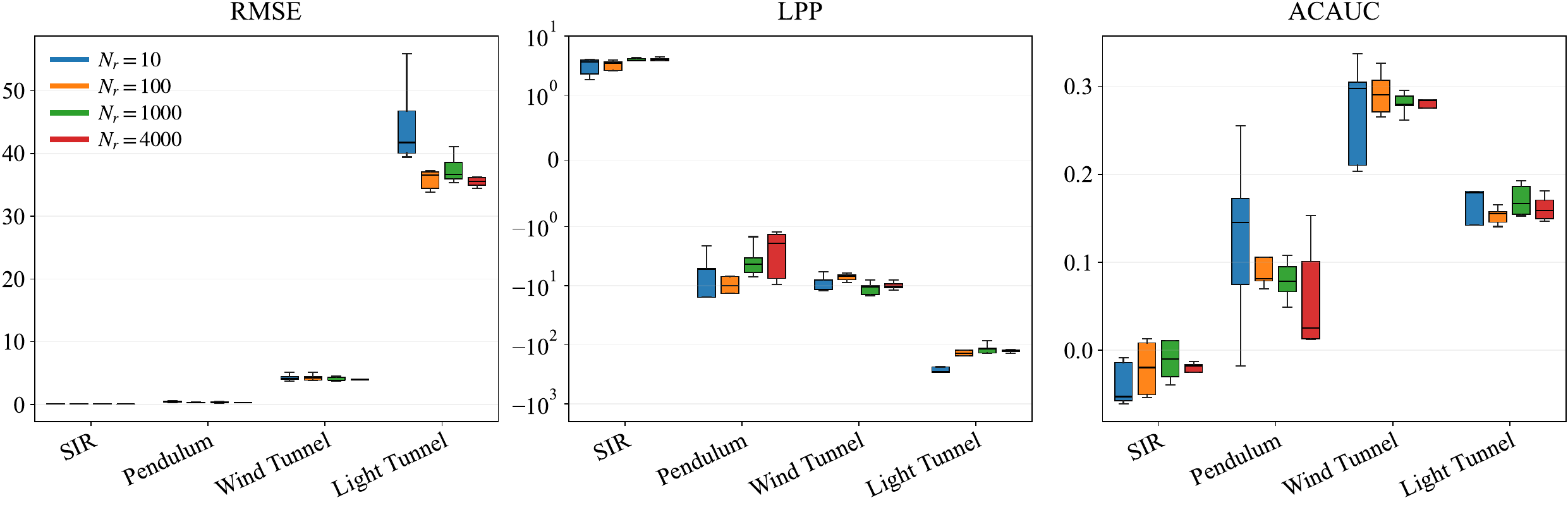}
    \caption{\textbf{Performance across unlabeled real-world data budgets.}
We vary $N_r$, the number of unlabeled real-world observations used during
adaptation, while keeping $N_s$ fixed, and report performance across tasks over
five independent runs. The effect of
the real-world data budget depends on the task and metric.}
    \label{fig:nreal_sensitivity}
\end{figure}

\subsection{Limitations}

Several limitations should be considered. First, if the learned transport poorly
approximates the parameter-preserving joint-transport condition in
Eq.~\eqref{eq:joint_sim_to_real_condition}, real-world inference after transport
can be degraded. This issue is related to how well the unlabeled real-world pool
represents the target observation distribution for simulator-to-real transport.
With small $N_r$, this empirical distribution can be estimated only coarsely,
especially under larger simulator--reality gaps, which can make transport
learning less stable. Second,
$\lambda_{\mathrm{info}}=1$ provides a label-free equal-weight default between
original and transported simulator observations, but the optimal weight may be
task-dependent. Sensitivity to this choice is reported in
Appendix~\ref{app:lambda_sensitivity}. Finally, SPIN adds training-time cost
through transport and discriminator updates, while preserving amortized inference
at test time.
Computational costs are summarized in Table~\ref{tab:compute_cost}.


\section{Conclusion}

We introduced SPIN for misspecified SBI with unlabeled,
unpaired real-world observations. SPIN learns information-preserving
bidirectional domain transport and performs posterior inference after mapping
real-world observations to the simulator domain. Instead of relying only on marginal domain alignment, SPIN encourages
domain transfer that preserves MI between the parameter of
interest and the transported observations. Across synthetic and controlled physical benchmarks, SPIN improved point
accuracy, posterior density at the reference parameter used for evaluation, and
expected-coverage diagnostics, with the clearest gains appearing under stronger
misspecification. These results suggest that domain transport for misspecified SBI
may benefit from explicitly preserving parameter-relevant information, in
addition to reducing marginal discrepancy between simulated and real-world data.

\bibliographystyle{unsrtnat}
\bibliography{references}

\newpage
\appendix

\section{Mathematical details for parameter-relevant information preservation}
\label{app:info_math_details}

This appendix collects the population identities used to interpret the
information-preservation objective. The identities clarify when simulator-label
supervision on transported observations corresponds to the conditional log-risk
of the real-world test-time rule, and why the same log-score is aligned with the
information-preservation interpretation used in the main text.

\subsection{Population identity for the test-time rule}

Let $(\Theta,X_s)\sim P_s$ denote the simulator joint distribution on
$\Theta\times\mathcal{X}_s$, and let $(\Theta,X_r)\sim P_r$ denote the
real-world joint distribution on $\Theta\times\mathcal{X}_r$. During training,
real-world samples are observed only through $X_r$; the variable $\Theta$ under
$P_r$ is used only to define population risks and evaluation quantities.

For any posterior estimator $q$, define the real-world test-time conditional
log-risk
\begin{equation*}
\mathcal{R}_{r}(q,G_{rs})
=
\mathbb{E}_{(\Theta,X_r)\sim P_r}
\left[
-\log q(\Theta\mid G_{rs}(X_r))
\right],
\end{equation*}
and the simulator-labeled transported risk
\begin{equation*}
\mathcal{R}_{\mathrm{tr}}(q,G_{sr},G_{rs})
=
\mathbb{E}_{(\Theta,X_s)\sim P_s}
\left[
-\log q(\Theta\mid G_{rs}(G_{sr}(X_s)))
\right].
\end{equation*}
The information-preservation objective in Eq.~\eqref{eq:linfo_new} is
\[
\Linfo(q)=\mathcal{R}_{\mathrm{tr}}(q,G_{sr},G_{rs}).
\]

The population condition connecting the simulator-originated training route and
the real-world test-time route is the parameter-preserving joint-transport
condition
\begin{equation}
(\Theta,X_r)
\stackrel{d}{=}
(\Theta,G_{sr}(X_s)),
\qquad
(\Theta,X_s)\sim P_s.
\label{eq:app_joint_transport_condition}
\end{equation}
This condition is stronger than matching only the observation marginal. Marginal
alignment can make $G_{sr}(X_s)$ and $X_r$ similar as observations, but it does
not by itself specify whether the transported observation preserves its relation
to $\Theta$ \citep{johansson2019support,zhao2019learning}. The same distinction
applies after a summary network: marginally matching $h_\omega(G_{sr}(X_s))$ and
$h_\omega(X_r)$ does not imply joint matching with the parameter
\citep{johansson2019support,zhao2019learning,stojanov2021domain}.

\begin{proposition}[Risk identity under parameter-preserving joint transport]
\label{prop:app_risk_equivalence}
If Eq.~\eqref{eq:app_joint_transport_condition} holds, then for any posterior
estimator $q$ and any real-to-simulator transport $G_{rs}$,
\begin{equation*}
\mathcal{R}_{r}(q,G_{rs})
=
\mathcal{R}_{\mathrm{tr}}(q,G_{sr},G_{rs})
=
\Linfo(q).
\end{equation*}
\end{proposition}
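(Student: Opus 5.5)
The plan is to view both risks as expectations of one fixed measurable function of the pair (parameter, real-domain observation). Then we use that expectations depend only on the joint law of the pair. Fix $q$ and $G_{rs}$, and define
\[
\ell(\theta,x) := -\log q(\theta\mid G_{rs}(x)), \qquad (\theta,x)\in\Theta\times\mathcal{X}_r .
\]
With this notation,
\[
\mathcal{R}_r(q,G_{rs})=\mathbb{E}_{P_r}[\ell(\Theta,X_r)],
\]
and $\mathcal{R}_{\mathrm{tr}}(q,G_{sr},G_{rs})$ is the expectation of $\ell(\Theta,G_{sr}(X_s))$ under $(\Theta,X_s)\sim P_s$. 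The function $\ell$ does not involve $G_{sr}$ or the domain of origin. It only sees the parameter and a point of $\mathcal{X}_r$.

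The key step is the change-of-variables (pushforward) identity. Let $\mu$ be the law of $(\Theta,G_{sr}(X_s))$, that is, the pushforward of $P_s$ under $(\theta,x_s)\mapsto(\theta,G_{sr}(x_s))$. Then
\[
\mathbb{E}_{P_s}[\ell(\Theta,G_{sr}(X_s))]=\int \ell\,d\mu .
\]
Eq.~\eqref{eq:app_joint_transport_condition} says exactly that $\mu$ equals the law of $(\Theta,X_r)$ under $P_r$. So the integral equals $\mathbb{E}_{P_r}[\ell(\Theta,X_r)]=\mathcal{R}_r(q,G_{rs})$. The second equality, $\mathcal{R}_{\mathrm{tr}}=\Linfo(q)$, is immediate from Eq.~\eqref{eq:linfo_new}, as already noted before the statement.

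The main obstacle is measure-theoretic hygiene, not the algebra. First, $\ell$ must be measurable. This requires $G_{rs}$, $G_{sr}$ to be measurable and $(\theta,x)\mapsto q(\theta\mid x)$ to be jointly measurable; I would state these as standing assumptions. Second, $\ell$ may fail to be integrable or bounded below, since $-\log q$ can take both signs. I would handle this by splitting $\ell=\ell^+-\ell^-$ and applying the pushforward identity to each nonnegative part, where it holds in $[0,\infty]$ without integrability assumptions. The two risks are then equal as extended reals whenever either one is well defined, i.e.\ not of the form $\infty-\infty$, and in particular whenever either is finite.

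I would close by remarking that only the joint law matters. Matching only the marginals of $X_r$ and $G_{sr}(X_s)$ would not suffice, because $\ell$ depends jointly on $(\theta,x)$. This ties back to the distinction drawn just before the proposition.
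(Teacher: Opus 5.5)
Your proposal is correct and follows essentially the same route as the paper: both apply the equality in distribution of $(\Theta,X_r)$ and $(\Theta,G_{sr}(X_s))$ to the single test function $f(\theta,x)=-\log q(\theta\mid G_{rs}(x))$ and read off $\mathcal{R}_r=\mathcal{R}_{\mathrm{tr}}=\Linfo(q)$. Your splitting into $\ell^+$ and $\ell^-$ refines the paper's blanket assumption that $f$ has finite expectation, giving equality as extended reals whenever either side is well defined.
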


\begin{proof}
By Eq.~\eqref{eq:app_joint_transport_condition}, the random pairs
$(\Theta,X_r)$ and $(\Theta,G_{sr}(X_s))$ have the same distribution. Therefore,
for any measurable function $f$ with finite expectation,
\[
\mathbb{E}_{(\Theta,X_r)\sim P_r}
\left[
f(\Theta,X_r)
\right]
=
\mathbb{E}_{(\Theta,X_s)\sim P_s}
\left[
f(\Theta,G_{sr}(X_s))
\right].
\]
Choosing
\[
f(\theta,x)=-\log q(\theta\mid G_{rs}(x))
\]
gives
\[
\mathbb{E}_{(\Theta,X_r)\sim P_r}
\left[
-\log q(\Theta\mid G_{rs}(X_r))
\right]
=
\mathbb{E}_{(\Theta,X_s)\sim P_s}
\left[
-\log q(\Theta\mid G_{rs}(G_{sr}(X_s)))
\right].
\]
The left-hand side is $\mathcal{R}_{r}(q,G_{rs})$, and the right-hand side is
$\mathcal{R}_{\mathrm{tr}}(q,G_{sr},G_{rs})=\Linfo(q)$.
\end{proof}

Proposition~\ref{prop:app_risk_equivalence} is a population identity under the
stated joint-transport condition. It does not assert that a learned finite-data
transport necessarily satisfies this condition. In the main experiments, this
condition is encouraged by adversarial translation, cycle consistency, identity
regularization, and the simulator-label information-preservation objective, and
the resulting inference quality is evaluated empirically.

\subsection{Log-score objective and information preservation}
\label{app:LPPeq}

The information-preservation objective is a simulator-label log-score on the
transported simulator observation. Applying the standard log-score
decomposition to the posterior input \(G_{rs}(G_{sr}(X_s))\) gives
\begin{align*}
\Linfo(q)
&=
\mathbb{E}_{(\Theta,X_s)\sim P_s}
\left[
-\log q\!\left(\Theta\mid G_{rs}(G_{sr}(X_s))\right)
\right] \\
&=
H_s\!\left(\Theta\mid G_{rs}(G_{sr}(X_s))\right)
+
\mathbb{E}
\left[
\KL\!\left(
p_s\!\left(\Theta\mid G_{rs}(G_{sr}(X_s))\right)
\,\|\, 
q\!\left(\Theta\mid G_{rs}(G_{sr}(X_s))\right)
\right)
\right].
\end{align*}
Here, the conditional distribution is induced by
\((\Theta,X_s)\sim P_s\). Since \(H_s(\Theta)\) is fixed by the simulator prior,
\[
I_s\!\left(\Theta;G_{rs}(G_{sr}(X_s))\right)
=
H_s(\Theta)
-
H_s\!\left(\Theta\mid G_{rs}(G_{sr}(X_s))\right).
\]
Thus, minimizing the optimal simulator-label log-score on
\(G_{rs}(G_{sr}(X_s))\) reduces the conditional entropy of the parameter given
the transported simulator observation. With finite model capacity, \(\Linfo\)
serves as a supervised log-score surrogate for preserving parameter-relevant
information after transport.

The same log-score interpretation applies to the real-world test-time input.
For SPIN, real-world inference is performed from \(G_{rs}(X_r)\), so the
corresponding population log-risk is
\begin{align*}
\mathcal{R}_{r}(q,G_{rs})
&=
\mathbb{E}_{(\Theta,X_r)\sim P_r}
\left[
-\log q\!\left(\Theta\mid G_{rs}(X_r)\right)
\right] \\
&=
H_r\!\left(\Theta\mid G_{rs}(X_r)\right)
+
\mathbb{E}
\left[
\KL\!\left(
p_r\!\left(\Theta\mid G_{rs}(X_r)\right)
\,\|\, 
q\!\left(\Theta\mid G_{rs}(X_r)\right)
\right)
\right].
\end{align*}
Therefore,
\[
H_r(\Theta)
+
\mathbb{E}_{(\Theta,X_r)\sim P_r}
\left[
\log q\!\left(\Theta\mid G_{rs}(X_r)\right)
\right]
\le
I_r\!\left(\Theta;G_{rs}(X_r)\right),
\]
with gap
\[
\mathbb{E}
\left[
\KL\!\left(
p_r\!\left(\Theta\mid G_{rs}(X_r)\right)
\,\|\,
q\!\left(\Theta\mid G_{rs}(X_r)\right)
\right)
\right].
\]
This is the population quantity approximated by LPP \citep{lueckmann2021benchmarking} on the real-world test set. Within a task, \(H_r(\Theta)\) is
fixed, so comparing this expected log posterior density across methods is
equivalent to comparing the corresponding posterior-based information lower
bound up to an additive constant. We use this only as an interpretation of the
posterior log-score, not as an independent mutual-information estimator.

\newpage
\section{Dataset and benchmark details}
\label{app:datasets}

Across all tasks in the main experiments, we use the same split sizes for
simulator and real-world observations. Specifically, \(N_s=4000\) simulator
observations are used for training, with 500 additional simulator observations
used for validation and 500 for testing. Similarly, \(N_r=4000\) real-world
observations are used for training, with 500 additional real-world observations
used for validation and 500 for testing.

\paragraph{SIR.}
SIR is a weakly misspecified synthetic benchmark based on a discrete-time
stochastic susceptible--infected--recovered model \citep{kermack1927sir}. We fix the population size and initial infected count to
\[
N=1000,\qquad I_0=10,
\]
and sample parameters from
\[
\beta \sim \mathcal{U}[0.05,0.5],\qquad
\gamma \sim \mathcal{U}[0.05,0.5],
\]
subject to the constraint $\beta>\gamma$. The simulator records daily new infections for $T=365$ days. Transitions are sampled with binomial draws. The real-world domain is generated from the same parameter values, but the daily counts are modified by a simple reporting-delay mechanism: on Saturdays and Sundays, $5\%$ of the reported infections are delayed, and the delayed mass is added back on Monday. 

\paragraph{Pendulum.}
Pendulum is a synthetic benchmark in which the simulator is a frictionless pendulum integrated with an Euler solver, while the real-world domain introduces damping. For data generation, we first sample
\[
\omega_0 \sim \mathcal{U}\!\left[\frac{\pi}{10},\pi\right],\qquad
\phi_0 \sim \mathcal{U}\!\left[-\frac{\pi}{2},\frac{\pi}{2}\right],
\]
and an initial angular velocity
\[
\dot\theta_0 \sim \mathcal{U}[-1,1].
\]
The simulator uses zero damping, whereas the synthetic real-world data use
\[
\alpha \sim \mathcal{U}[0.05,0.5].
\]
The Euler integrator is run for $T=200$ output steps over $T_{\max}=10$ seconds with inner step size $\Delta t_{\mathrm{inner}}=0.01$, which yields $5$ inner Euler updates per reported observation step. Gaussian observation noise with standard deviation $0.05$ is added in both domains.

\paragraph{Wind Tunnel.}
Wind Tunnel uses the \texttt{wt\_intake\_impulse\_v1} dataset from Causal Chambers \citep{gamella2025causalchambers}, with experiment $\texttt{load\_out\_0.5\_osr\_downwind\_4}$.
Each real observation is constructed as one impulse response extracted from the downwind barometer trace. The real impulse rows are shuffled once with a fixed permutation and then split into train/validation/test subsets.

For simulation, we use the mechanistic A2C3 wind-tunnel model. We sample the hatch position uniformly:
\[
H \sim \mathcal{U}[0,45].
\]
The misspecification reflects the gap between the mechanistic A2C3 model and the
measured chamber response to fan loads and hatch position.

\paragraph{Light Tunnel.}
Light Tunnel uses the \texttt{lt\_camera\_v1} dataset from Causal Chambers \citep{gamella2025causalchambers}, with
experiment \texttt{uniform\_ap\_1.8\_iso\_500.0\_ss\_0.005}. For each real image,
the dataset provides light-source RGB values $(R,G,B)$ and polarizer angles
$(\mathrm{pol}_1,\mathrm{pol}_2)$. We convert the polarizer angles to
\[
\alpha = \cos^2\!\bigl((\mathrm{pol}_1-\mathrm{pol}_2)\pi/180\bigr)
\]
using Malus' law \citep{collett2005polarization}. Thus,
\[
R,G,B \in [0,255],
\qquad
\alpha\in[0,1].
\]

For the simulated domain, we use the F3 light-tunnel model and sample
\[
R,G,B \sim \mathcal{U}[0,255],
\qquad
\alpha \sim \mathcal{U}[0,1].
\]
The sampled dimming variable is converted to a polarizer-angle pair
$(0,\Delta)$ with
\[
\Delta = \arccos(\sqrt{\alpha}) \cdot \frac{180}{\pi}.
\]
The simulator then renders the image using the stored camera sensitivity curves,
white-balance coefficients, exposure value, and tunnel geometry constants. The misspecification reflects the gap between this simplified optical model and
the measured real light-tunnel response.

\newpage
\section{Implementation details and computational cost}
\label{app:implementation}

\paragraph{Baseline training.}
\textbf{NPE} \citep{papamakarios2016npe,greenberg2019automatic} trains the
summary network and conditional flow jointly by minimizing the negative
conditional log-likelihood on simulated pairs $(\theta,x_s)$.
\textbf{NPE-MMD} \citep{elsemuller2025uda} uses the same architecture and
optimizer as NPE and adds an inverse-multiquadratic MMD (IMQ-MMD) \citep{gretton2012mmd} penalty
between simulated and real summary features. We use the standard
multi-scale IMQ kernel \citep{tolstikhin2018wae} with scales
$\{0.1,0.2,0.5,1,2,5,10\}$. \textbf{NPE-DANN} \citep{elsemuller2025uda} also uses the same summary network
and posterior flow, and adds a domain classifier with $3$ hidden layers of width
$256$. Domain confusion is applied through a gradient reversal layer with the
standard schedule \citep{ganin2016dann}
\[
\lambda_{\mathrm{grl}}(p)=\frac{2}{1+\exp(-10p)}-1,
\]
where $p$ is the normalized training progress. The domain classifier is trained
jointly with the summary network and posterior flow using binary cross-entropy.

\paragraph{Parameterization and posterior family.}
For all tasks, the parameters are bounded. We therefore map parameters $\theta$
to an unconstrained latent variable $u$ with a componentwise logit transform and
train the posterior model in $u$-space. The posterior estimator is a conditional
masked autoregressive flow (MAF) implemented with \texttt{zuko}
\citep{papamakarios2017maf,rozet2023zuko}. For all tasks, the conditioner depth
is fixed to $3$. At evaluation time, posterior samples are transformed back to
the parameter space through the inverse sigmoid map. Table~\ref{tab:impl_hparams}
summarizes the main training hyperparameters.

\paragraph{Optimization.}
For NPE, NPE-MMD, and NPE-DANN, we optimize the summary network and posterior
flow with AdamW \citep{loshchilov2019adamw}. For NPE-DANN, the domain classifier
is optimized jointly with the same AdamW optimizer. For SPIN, we use three
optimizers: (i) Adam \citep{kingma2015adam} for the generators, (ii) Adam for
the discriminators, and (iii) AdamW for the summary network and posterior flow.
For the generator and discriminator optimizers, we use $\beta_1=0.5$ and
$\beta_2=0.999$. The generator/discriminator learning rate is fixed to $0.0002$
in all tasks, while the posterior optimizer uses the task-specific learning rate
from Table~\ref{tab:impl_hparams}.

\begin{table}[H]
\centering
\caption{\textbf{Task-specific training hyperparameters used in the main experiments.}}
\label{tab:impl_hparams}
\small
\begin{tabular}{lcccc}
\toprule
 & SIR & Pendulum & Wind Tunnel & Light Tunnel \\
\midrule
Epochs & 100 & 100 & 100 & 100 \\
Batch size $B$ & 100 & 100 & 100 & 10 \\
Learning rate & $5\times10^{-4}$ & $1\times10^{-4}$ & $5\times10^{-4}$ & $5\times10^{-4}$ \\
Weight decay & $1\times10^{-4}$ & $1\times10^{-4}$ & $1\times10^{-4}$ & $1\times10^{-4}$ \\
Embedding dim & 6 & 10 & 10 & 20 \\
\bottomrule
\end{tabular}
\end{table}

\newpage

\paragraph{Summary networks.}
The posterior flow is conditioned on a task-specific summary network
\citep{deistler2025practicalguide,radev2020bayesflow,chen2021nass}. For
Pendulum and Wind Tunnel, we use 1D convolutional summary networks following
encoder architectures used in prior misspecified SBI work
\citep{wehenkel2025rope,senouf2025frisbi}. For SIR, we use a compact 1D
convolutional summary network, and for Light Tunnel we use a 2D convolutional
summary network operating directly on RGB images. Table~\ref{tab:summary_arch}
lists the task-specific summary networks used in all methods
for each task.

\begin{table}[H]
\centering
\caption{\textbf{Task-specific summary networks.}}
\label{tab:summary_arch}
\small
\begin{tabular}{p{2.6cm}p{9.9cm}}
\toprule
Task & Architecture \\
\midrule
SIR &
\textbf{Summary network:} 1D CNN with conv stack
$1{\to}16{\to}64{\to}128{\to}128{\to}128$,
kernel sizes $(5,5,3,3,3)$, strides $(1,2,1,2,1)$, dilation $2$ in the last
three convolutions, average pooling after the 2nd and 4th convolutions, and
adaptive average pooling to length $8$. The head is an MLP
$\texttt{flat}\to256\to128\to6$ with ReLU activations.\\
\midrule
Pendulum &
\textbf{Summary network:} 1D CNN with conv stack
$1{\to}16{\to}64{\to}128{\to}128{\to}128{\to}128$,
kernel size $3$, dilation $2$, strided layers at the 2nd, 4th, and 6th
convolutions, and average pooling after the 2nd, 4th, and 6th convolutions. The
convolutional trunk is followed by an MLP
$\texttt{flat}\to512\to128\to32\to10$ with ReLU activations. \\
\midrule
Wind Tunnel &
\textbf{Summary network:} 1D CNN with conv stack
$1{\to}8{\to}32{\to}64{\to}64{\to}64{\to}64$,
kernel size $3$, dilation $2$, strided layers at the 2nd, 4th, and 6th
convolutions, average pooling after the 2nd, 4th, and 6th convolutions, and an
MLP $\texttt{flat}\to256\to64\to16\to10$ with ReLU activations. \\
\midrule
Light Tunnel &
\textbf{Summary network:} 2D CNN with conv stack
$3{\to}48{\to}96{\to}128$ using kernels $(5,3,3)$ with stride $2$ at every
convolution, followed by adaptive average pooling to $4\times4$, then an MLP
$\texttt{flat}\to128\to20$ with ReLU activations. \\
\bottomrule
\end{tabular}
\end{table}

\newpage

\paragraph{SPIN generators and discriminators.}
For SPIN, we use bidirectional observation-space translators $G_{sr}$ and
$G_{rs}$ together with one discriminator per domain \citep{zhu2017cyclegan}. For the 1D tasks, both
translators use the same architecture: a stem convolution $1{\to}32$ with kernel
size $7$, two downsampling convolutions $32{\to}64{\to}128$ with stride $2$, a
bottleneck composed of $4$ residual blocks with dilation schedule $(1,2,4,8)$,
and two upsampling stages with linear interpolation and $3\times1$ convolutions
$128{\to}64{\to}32$, followed by a $7\times1$ output convolution. The final
prediction is combined with the input through a learned residual skip:
\[
y = \sigma(a)\,x + (1-\sigma(a))\,\hat y.
\]
For the 2D image task, both translators use a 2D analogue of the same design
with channels $3{\to}32{\to}64{\to}128$, bilinear upsampling, and $4$ residual
bottleneck blocks.

For the 1D tasks, the discriminator trunk is
$1{\to}32{\to}64{\to}128{\to}128$ with LeakyReLU activations and a fused
patch/global head. For the 2D task, the discriminator trunk is
$3{\to}32{\to}64{\to}128{\to}128$ with the same patch/global fusion design.
Spectral normalization \citep{miyato2018spectral} is enabled in all
discriminators.

\newpage

\paragraph{Training algorithm.}
Algorithm~\ref{alg:spin_training} summarizes the SPIN training loop, including
the generator, discriminator, and posterior updates performed for each
mini-batch.

\begin{algorithm}[H]
\caption{\textbf{Training procedure for SPIN}}
\label{alg:spin_training}
\begin{algorithmic}[1]
\Require simulated training pairs $\{(\theta_i,x^s_i)\}$, unlabeled real training pool $\{x^r_j\}$, summary network $h_\omega$, posterior flow $q_\psi$, translators $G_{sr},G_{rs}$, discriminators $D_R,D_S$
\State Initialize $h_\omega,q_\psi,G_{sr},G_{rs},D_R,D_S$
\For{epoch $=1,\dots,E$}
    \For{mini-batch}
        \State Sample simulated batch $(\theta,x_s)$ and unlabeled real batch $x_r$
        \State \textbf{Generator step:}
        \State \hspace{1em} $x_{sr}\gets G_{sr}(x_s)$,\quad $x_{rs}\gets G_{rs}(x_r)$
        \State \hspace{1em} $x_{srs}\gets G_{rs}(G_{sr}(x_s))$,\quad $x_{rsr}\gets G_{sr}(G_{rs}(x_r))$
        \State \hspace{1em} Compute adversarial, cycle-consistency, and identity losses
        \State \hspace{1em} Compute information-preservation loss $-\log q_\psi(\theta\mid h_\omega(x_{srs}))$
        \State \hspace{1em} Form $\LG=\lambda_{\mathrm{adv}}\LadvG+\lambda_{\mathrm{cyc}}\Lcyc+\lambda_{\mathrm{id}}\Lid+\lambda_{\mathrm{info}}[-\log q_{\sg(\psi,\omega)}(\theta\mid x_{srs})]$
        \State \hspace{1em} Update $G_{sr},G_{rs}$ by descending $\nabla_{G_{sr},G_{rs}}\LG$
        \State \textbf{Discriminator step:}
        \State \hspace{1em} Form $\LD=\LadvD$ with hinge losses on real and translated samples
        \State \hspace{1em} Update $D_R,D_S$ by descending $\nabla_{D_R,D_S}\LD$
        \State \textbf{Posterior step:}
        \State \hspace{1em} Recompute $x_{srs}$ without gradient through the translators
        \State \hspace{1em} Compute
        \[
        \mathcal{L}_{\mathrm{NPE}}=
        -\log q_\psi(\theta\mid h_\omega(x_s))
        +
        \lambda_{\mathrm{info}}
        \bigl[-\log q_\psi(\theta\mid h_\omega(x_{srs}))\bigr]
        \]
        \State \hspace{1em} Update $h_\omega,q_\psi$
    \EndFor
    \State Evaluate simulator-side validation NPE loss on $(\theta_{\mathrm{val}},x^s_{\mathrm{val}})$
\EndFor
\State Use $q_\psi(\theta\mid x_{rs})$ for test-time inference
\end{algorithmic}
\end{algorithm}

\newpage

\paragraph{Compute environment.}
All experiments were run on a single NVIDIA RTX A6000 GPU unless stated
otherwise. SPIN is computationally heavier than NPE, NPE-MMD, and NPE-DANN
because each mini-batch includes generator, discriminator, and posterior updates.
This cost affects training only; test-time inference remains amortized and uses
a single real-to-simulator transport followed by posterior evaluation.

\paragraph{Computational cost.}
We compare the computational cost of all methods in Table~\ref{tab:compute_cost}.
The table reports trainable parameters, average training time per iteration, and
amortized inference time per observation. Training time includes all updates used
by each method, and inference time is measured for posterior evaluation with
1000 samples per observation.

\begin{table}[H]
\centering
\caption{\textbf{Model size and computational cost.}
We report trainable parameters, average training time per iteration, and
amortized inference time per observation. Training time includes all updates used
by each method. Inference time is measured for posterior evaluation with 1000
samples per observation.}
\label{tab:compute_cost}
\resizebox{\linewidth}{!}{%
\begin{tabular}{llccc}
\toprule
Task & Method & Trainable parameters & Training time / iteration & Inference time / obs. \\
\midrule
\multirow{4}{*}{SIR}
& NPE & 467,324 & 6.49 $\pm$ 0.04 ms & 3.79 $\pm$ 0.39 ms \\
& NPE-MMD & 467,324 & 11.67 $\pm$ 0.05 ms & 3.76 $\pm$ 0.30 ms \\
& NPE-DANN & 600,957 & 8.88 $\pm$ 0.05 ms & 3.58 $\pm$ 0.01 ms \\
& SPIN & 1,606,672 & 100.91 $\pm$ 0.09 ms & 5.32 $\pm$ 0.09 ms \\
\midrule
\multirow{4}{*}{Pendulum}
& NPE & 1,387,744 & 14.93 $\pm$ 0.35 ms & 5.59 $\pm$ 0.05 ms \\
& NPE-MMD & 1,387,744 & 24.86 $\pm$ 0.46 ms & 5.60 $\pm$ 0.04 ms \\
& NPE-DANN & 1,522,401 & 21.30 $\pm$ 0.31 ms & 5.65 $\pm$ 0.03 ms \\
& SPIN & 2,527,092 & 323.30 $\pm$ 81.20 ms & 6.97 $\pm$ 1.00 ms \\
\midrule
\multirow{4}{*}{Wind Tunnel}
& NPE & 112,250 & 7.25 $\pm$ 0.06 ms & 2.27 $\pm$ 0.19 ms \\
& NPE-MMD & 112,250 & 12.69 $\pm$ 0.03 ms & 2.13 $\pm$ 0.02 ms \\
& NPE-DANN & 246,907 & 10.10 $\pm$ 0.04 ms & 2.14 $\pm$ 0.03 ms \\
& SPIN & 1,251,598 & 96.31 $\pm$ 0.56 ms & 3.88 $\pm$ 0.47 ms \\
\midrule
\multirow{4}{*}{Light Tunnel}
& NPE & 1,454,308 & 6.49 $\pm$ 0.16 ms & 8.54 $\pm$ 0.48 ms \\
& NPE-MMD & 1,454,308 & 13.40 $\pm$ 0.41 ms & 8.69 $\pm$ 0.66 ms \\
& NPE-DANN & 1,591,525 & 9.66 $\pm$ 0.86 ms & 7.96 $\pm$ 0.02 ms \\
& SPIN & 4,981,842 & 99.66 $\pm$ 0.34 ms & 9.27 $\pm$ 0.04 ms \\
\bottomrule
\end{tabular}}
\end{table}

\newpage
\section{Additional experimental results and analyses}

\subsection{Comparison of related methods}
\label{app:method_comparison}

Table~\ref{tab:method_comparison} summarizes the related methods discussed in
the main text.

\begin{table}[H]
\centering
\caption{\textbf{Comparison of related misspecified SBI methods.}
The table summarizes whether each method uses real-world data, whether it
requires a calibration set with known parameters, whether inference is fully
amortized, and the main mechanism used to handle model misspecification. Here,
a calibration set denotes real-world observations with known parameters.}
\label{tab:method_comparison}
\resizebox{\linewidth}{!}{%
\begin{tabular}{lcccc}
\toprule
Method & Real-world data & Calibration set & Amortized & Approach \\
\midrule
NPE \citep{papamakarios2016npe,greenberg2019automatic} & None & No & Yes & Simulator-only inference \\
RoPE \citep{wehenkel2025rope} & Yes & Yes & Not fully & Optimal transport \\
FRISBI \citep{senouf2025frisbi} & Yes & Yes & Yes & Optimal transport \\
NPE+SC \citep{mishra2025selfconsistency} & Yes & No & Yes & Bayesian self-consistency \\
NPE-RS \citep{huang2023robuststatistics} & Yes & No & Not fully & MMD-based domain alignment \\
NPE-MMD \citep{elsemuller2025uda} & Yes & No & Yes & MMD-based domain alignment \\
NPE-DANN \citep{elsemuller2025uda} & Yes & No & Yes & Adversarial domain alignment \\
SPIN & Yes & No & Yes & Information-preserving domain transfer \\
\bottomrule
\end{tabular}}
\end{table}

\newpage

\subsection{Posterior-discrepancy diagnostics}
\label{app:additional_quant}

Table~\ref{tab:posterior_discrepancy} reports secondary posterior-discrepancy
diagnostics corresponding to the analysis described in the main text.

\begin{table}[H]
\centering
\caption{\textbf{Posterior-discrepancy diagnostics.}
We report symmetric Kullback--Leibler divergence (sKL)
\citep{kullback1951information}, Wasserstein distance
\citep{ramdas2017wasserstein}, and posterior MMD
\citep{gretton2012mmd} as secondary diagnostics
computed between simulator-side posteriors and method-specific real-world
posteriors for matched simulated and real-world observations. On the more clearly misspecified Pendulum, Wind Tunnel, and Light Tunnel
benchmarks, SPIN reduces sKL and often improves the auxiliary discrepancy
measures, with task-dependent differences across Wasserstein distance and
posterior MMD.}
\label{tab:posterior_discrepancy}
\resizebox{\linewidth}{!}{
\begin{tabular}{llccccc}
\toprule
Task & Metric & NPE & NPE-MMD & NPE-DANN & SPIN (w/o $\mathcal{L}_{\mathrm{info}}$) & SPIN \\
\midrule
\multirow{3}{*}{SIR}
& sKL        & $2.82 \pm 0.13$ & $\mathbf{2.58 \pm 0.30}$ & $2.70 \pm 0.29$ & $3.65 \pm 0.88$ & $3.19 \pm 0.64$ \\
& Wasserstein & $0.0046 \pm 0.0003$ & $0.0041 \pm 0.0007$ & $\mathbf{0.0040 \pm 0.0005}$ & $0.0401 \pm 0.0029$ & $0.0046 \pm 0.0002$ \\
& MMD         & $0.0012 \pm 0.0000$ & $0.0012 \pm 0.0000$ & $0.0012 \pm 0.0000$ & $0.2011 \pm 0.0271$ & $\mathbf{0.0012 \pm 0.0000}$ \\
\midrule
\multirow{3}{*}{Pendulum}
& sKL        & $26.51 \pm 5.45$ & $25.59 \pm 7.81$ & $19.57 \pm 7.01$ & $10.13 \pm 2.70$ & $\mathbf{9.76 \pm 4.86}$ \\
& Wasserstein & $0.559 \pm 0.158$ & $0.284 \pm 0.075$ & $0.398 \pm 0.062$ & $0.229 \pm 0.027$ & $\mathbf{0.224 \pm 0.027}$ \\
& MMD         & $0.7310 \pm 0.0710$ & $0.5350 \pm 0.3590$ & $0.5430 \pm 0.1560$ & $\mathbf{0.3540 \pm 0.1020}$ & $0.4030 \pm 0.1410$ \\
\midrule
\multirow{3}{*}{Wind Tunnel}
& sKL        & $28.51 \pm 5.35$ & $14.80 \pm 5.13$ & $16.85 \pm 4.00$ & $19.31 \pm 4.39$ & $\mathbf{10.86 \pm 2.75}$ \\
& Wasserstein & $5.667 \pm 0.862$ & $\mathbf{2.968 \pm 0.576}$ & $3.269 \pm 0.377$ & $3.129 \pm 0.289$ & $3.277 \pm 0.508$ \\
& MMD         & $1.2430 \pm 0.1470$ & $0.7810 \pm 0.1870$ & $0.8540 \pm 0.1410$ & $0.9390 \pm 0.1330$ & $\mathbf{0.6640 \pm 0.1090}$ \\
\midrule
\multirow{3}{*}{Light Tunnel}
& sKL        & $37.31 \pm 6.33$ & $28.82 \pm 0.42$ & $26.72 \pm 3.86$ & $24.62 \pm 2.11$ & $\mathbf{21.38 \pm 1.01}$ \\
& Wasserstein & $51.158 \pm 12.456$ & $41.089 \pm 3.134$ & $29.013 \pm 8.909$ & $25.717 \pm 4.879$ & $\mathbf{17.868 \pm 0.823}$ \\
& MMD         & $1.0210 \pm 0.0960$ & $0.8020 \pm 0.0560$ & $0.5840 \pm 0.2520$ & $0.5070 \pm 0.1250$ & $\mathbf{0.3190 \pm 0.0250}$ \\
\bottomrule
\end{tabular}}
\end{table}

\newpage

\subsection{Evaluation on transported simulator observations}
\label{app:xsrs_supervised_eval}

Figure~\ref{fig:xsrs_eval_appendix} evaluates $x_{srs}$ using RMSE, LPP, and
ACAUC. Since $x_{srs}$ is generated from labeled simulator samples, this analysis
checks whether $\Linfo$ increases the posterior density assigned to the original
simulator parameter on the transported observations it directly supervises.

\begin{figure}[H]
    \centering
    \includegraphics[width=0.92\linewidth]{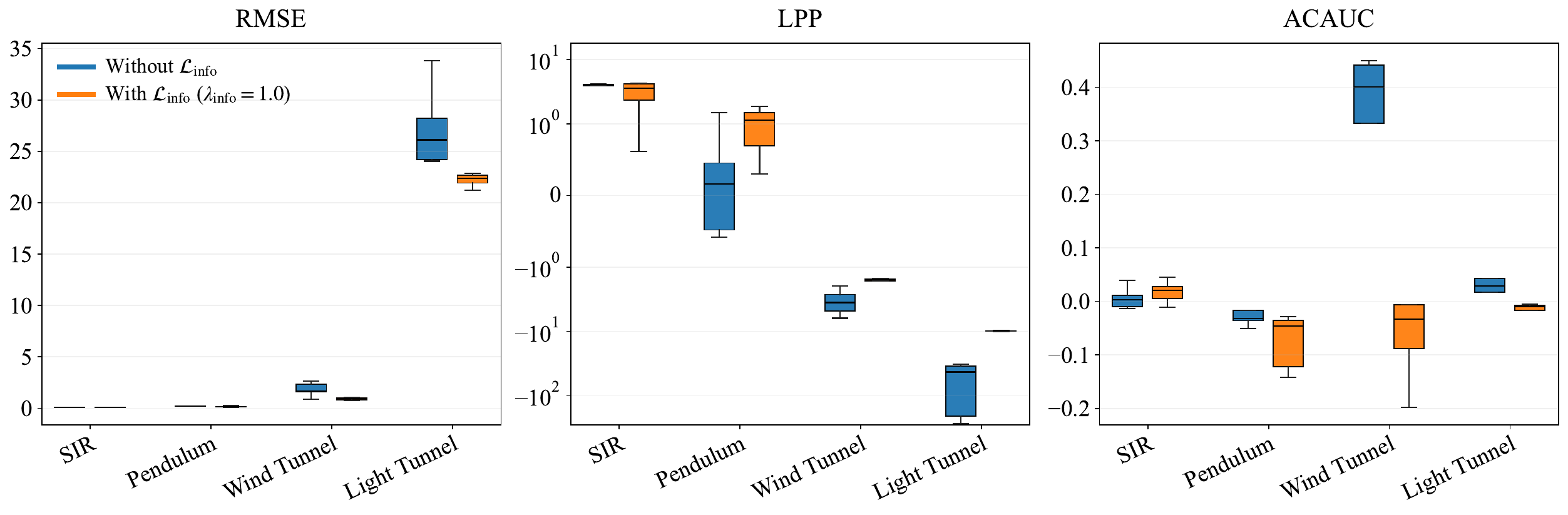}
    \caption{\textbf{Evaluation on transported simulator observations.}
    We evaluate $x_{srs}$ using RMSE, LPP, and ACAUC over
    five independent runs. Since $x_{srs}$ is generated from labeled simulator
    samples, the original simulator label remains available for evaluation. The comparison between SPIN and SPIN (w/o $\mathcal{L}_{\mathrm{info}}$)
therefore serves as a diagnostic for the simulator-originated training path,
showing what is gained by adding $\mathcal{L}_{\mathrm{info}}$ to the shared
bidirectional transport architecture.}
    \label{fig:xsrs_eval_appendix}
\end{figure}

\newpage

\subsection{Optimization dynamics}
\label{app:optimization_dynamics}

Figure~\ref{fig:ls_dynamics_appendix} reports the simulator-side NPE risk $\Ls$
for $\lambda_{\mathrm{info}}=0$ and $\lambda_{\mathrm{info}}=1$.

\begin{figure}[H]
    \centering
    \includegraphics[width=0.92\linewidth]{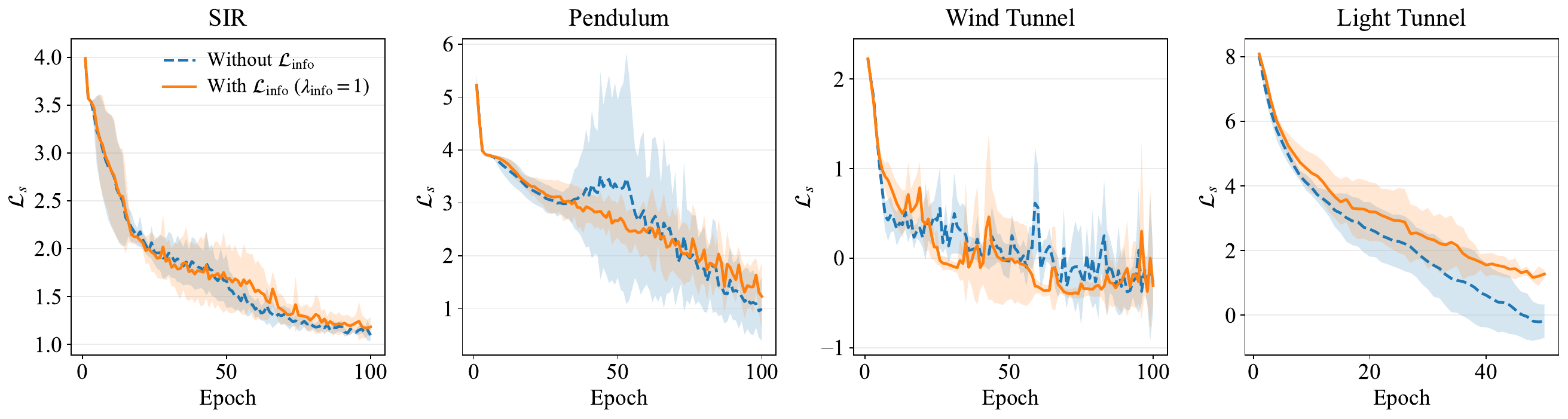}
    \caption{\textbf{Simulator-side risk dynamics.}
    We plot the median over five runs together with run-to-run variability for
    the simulator-side NPE risk $\Ls$, comparing $\lambda_{\mathrm{info}}=0$ and
    $\lambda_{\mathrm{info}}=1$. This analysis shows that adding
    information-preservation supervision preserves the simulator-side posterior
    training anchor.}
    \label{fig:ls_dynamics_appendix}
\end{figure}

\newpage

\subsection{Calibration analysis}
\label{app:calibration_analysis}

Figure~\ref{fig:calibration_appendix} reports the calibration curves
corresponding to the ACAUC values used in the main results
\citep{wehenkel2025rope,senouf2025frisbi,cook2006validation,talts2018sbc}.

\begin{figure}[H]
    \centering
    \includegraphics[width=0.92\linewidth]{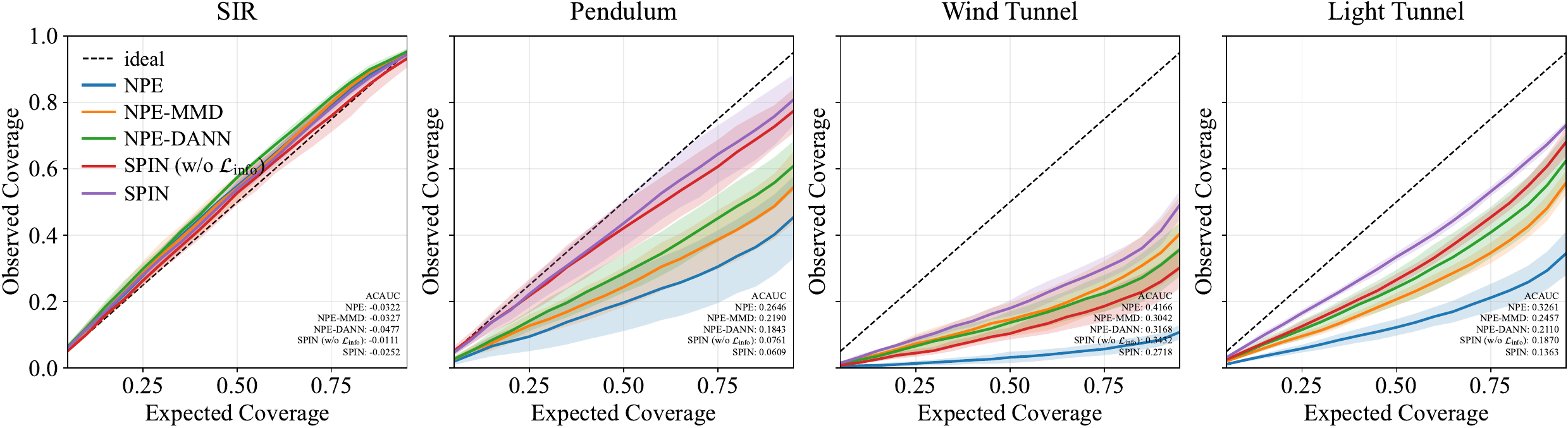}
    \caption{\textbf{Calibration analysis.}
    Marginal calibration curves and ACAUC values for the four tasks. Curves
    closer to the diagonal indicate better empirical coverage. SPIN improves or
    maintains calibration on the more strongly misspecified benchmarks, while
    differences are smaller on weakly misspecified SIR where the
    simulator-trained posterior is already competitive.}
    \label{fig:calibration_appendix}
\end{figure}

\newpage

\subsection{Sensitivity to information-preservation strength}
\label{app:lambda_sensitivity}

Figure~\ref{fig:lambda_info_sensitivity} varies the strength of
information-preservation supervision while keeping the rest of the training
setup fixed.

\begin{figure}[H]
    \centering
    \includegraphics[width=0.92\linewidth]{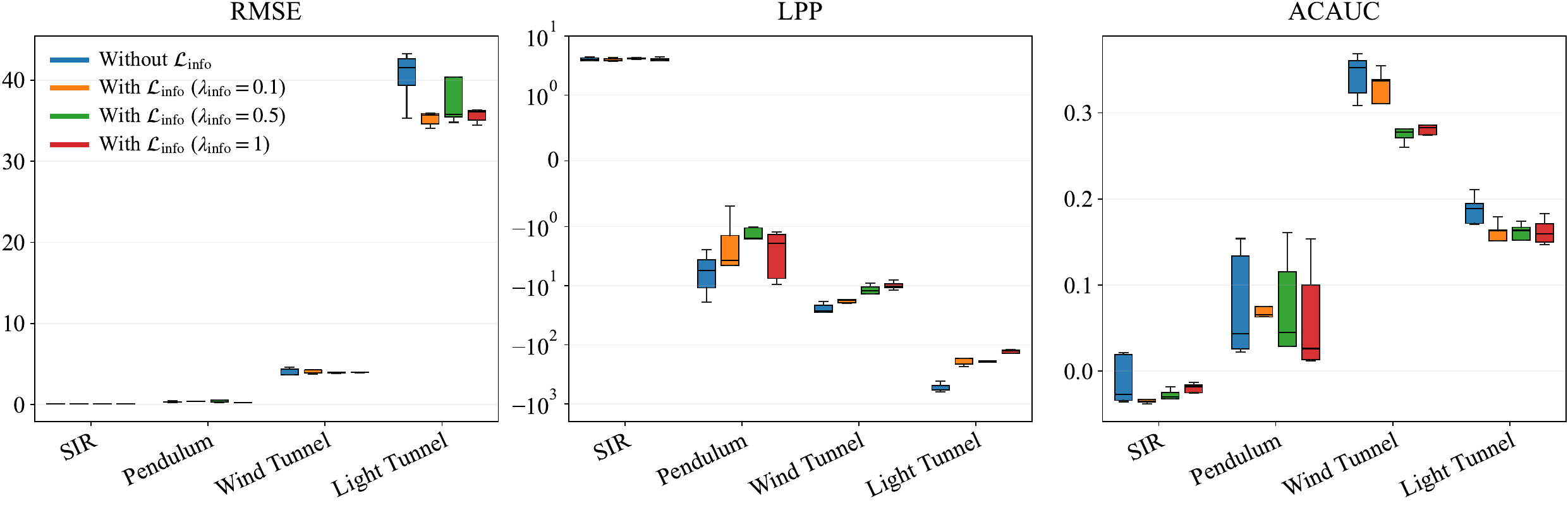}
    \caption{\textbf{Sensitivity analysis of $\lambda_{\mathrm{info}}$.}
    We report the effect of varying $\lambda_{\mathrm{info}}$ across tasks.
    Unlike Figure~\ref{fig:info_dynamics}, which isolates the presence or
    absence of the information-preservation term, this figure shows that the
    most effective information-preservation strength can be task-dependent. In
    particular, weakly misspecified settings such as SIR may require little or
    only moderate information-preservation pressure, whereas more clearly
    misspecified tasks benefit more consistently from stronger
    information-preservation constraints.}
    \label{fig:lambda_info_sensitivity}
\end{figure}

\newpage

\subsection{Qualitative transport examples}
\label{app:qualitative_transport_examples}

Figure~\ref{fig:qualitative_transport} visualizes the learned observation-space
transport and is included only as qualitative support.

\begin{figure}[H]
    \centering
    \includegraphics[width=0.92\linewidth]{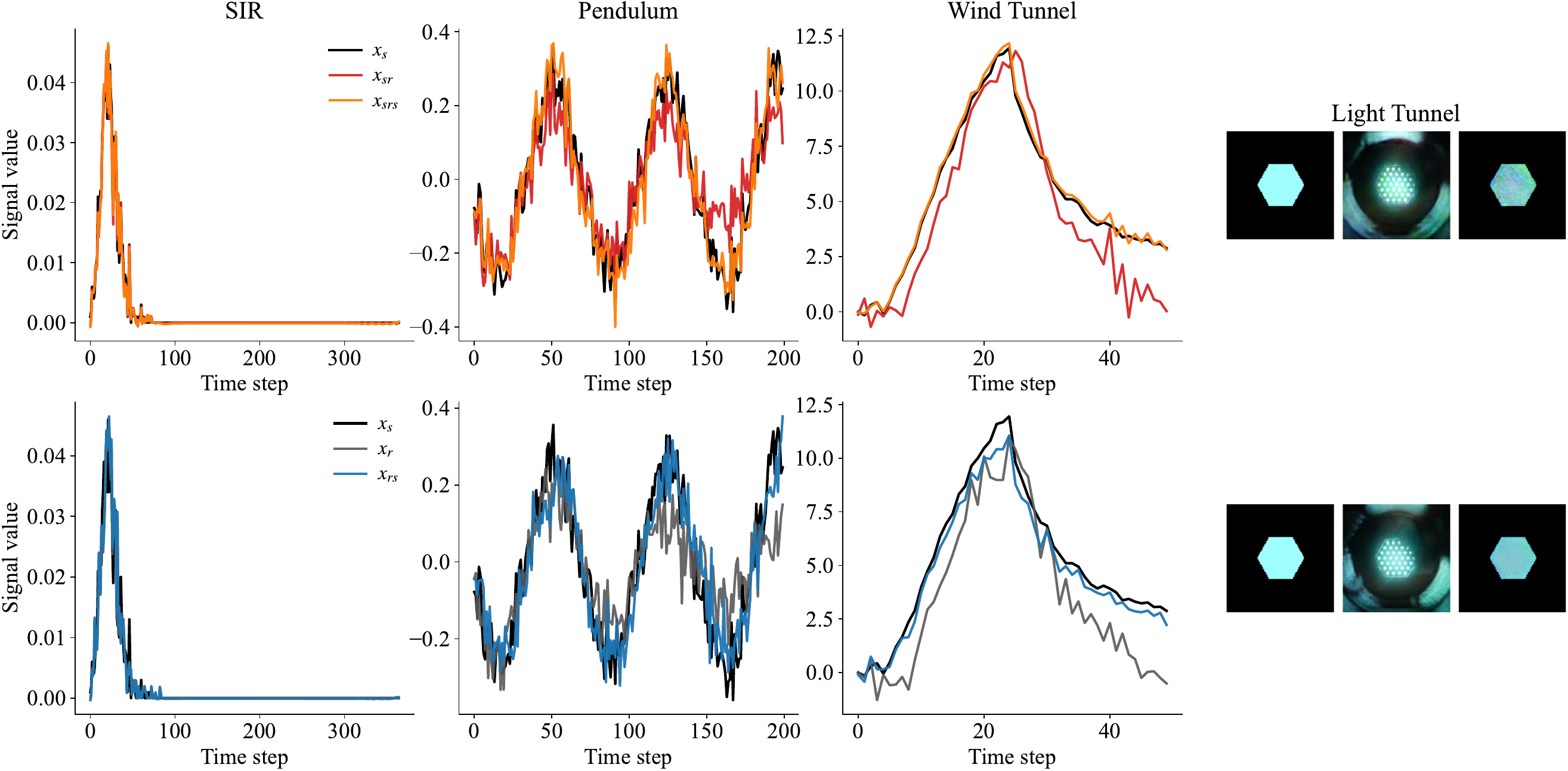}
    \caption{\textbf{Additional qualitative transport examples.}
    Representative examples of simulated observations $x_s$, real-world
    observations $x_r$, simulator-to-real transported observations $x_{sr}$, and
    real-to-simulator transported observations $x_{rs}$. These panels illustrate
    the learned observation-space transport, particularly on the more structured
    visual domain shifts. Visual alignment alone does not certify the joint
    parameter--observation alignment required by SPIN, but these examples provide
    qualitative support for the learned observation-space transport.}
    \label{fig:qualitative_transport}
\end{figure}

\clearpage

\end{document}